\documentclass{article}
\usepackage[english]{babel}
\usepackage[normalem]{ulem}
\usepackage{setspace}

\usepackage[letterpaper,top=2cm,bottom=2cm,left=3cm,right=3cm,marginparwidth=1.75cm]{geometry}

\usepackage{amsthm}
\usepackage{amsmath,amssymb}
\theoremstyle{plain} 
\newtheorem{theorem}{Theorem}
\newtheorem{proposition}{Proposition}
\theoremstyle{remark}
\theoremstyle{definition}

\usepackage{graphicx}
\usepackage{float}
\usepackage[colorlinks=true, allcolors=blue]{hyperref}
\usepackage{subcaption}
\newcommand{\R}{{\mathbb R}}
\newcommand{\eps}{{\varepsilon}}
\newcommand{\calL}{\mathcal L}

\title{\textbf{Sampling Boltzmann distributions via normalizing flow approximation of transport maps}}

\author{%
	Zia Ur Rehman and Gero Friesecke\\
	\small Department of Mathematics, School of Computation, Information and Technology,\\
	\small Technical University of Munich, Boltzmannstrasse 3, 85748 Garching, Germany\\
	\small \texttt{ ziaur.rehman@tum.de, gf@ma.tum.de}
}

\begin{document}
	\maketitle
	
	\begin{abstract}
		In a celebrated paper \cite{noe2019boltzmann}, Noé, Olsson, K\"ohler and Wu introduced an efficient method for sampling high-dimensional Boltzmann distributions arising in molecular dynamics via normalizing flow approximation of transport maps. Here, we place this approach on a firm mathematical foundation. We prove the existence of a normalizing flow between the reference measure and the true Boltzmann distribution up to an arbitrarily small error in the Wasserstein distance. This result covers general Boltzmann distributions from molecular dynamics, which have low regularity due to the presence of interatomic Coulomb and Lennard-Jones interactions. The proof is based on a rigorous construction of the Moser transport map for low-regularity endpoint densities and approximation theorems for neural networks in Sobolev spaces. 
		
		Numerical simulations for a simple model system and for the alanine dipeptide molecule confirm that the true and generated distributions are close in the Wasserstein distance. Moreover we observe that the RealNVP architecture does not just successfully capture the equilibrium Boltzmann distribution but also the metastable dynamics.

	\end{abstract}
	
	\large
	
	\section{Introduction}
	
	Our goal in this paper is to put the recent method of No\'{e} et al. \cite{noe2019boltzmann} for sampling from high-dimensional Boltzmann distributions on a firm mathematical footing. 
	
	Sampling from Boltzmann distributions is of great interest in computational physics, chemistry, and biochemistry, because the conformations of a molecule are described by the dominant regions of the associated Boltzmann distribution. Sampling from this distribution is challenging for two reasons: first, the configuration space on which the distribution lives is high-dimensional, of dimension $3N$, where $N$ is the number of atoms in the molecule.  Second, trajectory methods, i.e. using long-time samples of solutions of the underlying Langevin stochastic differential equation, while in principle correct, are expensive; this is because trajectories typically get trapped for a long time in metastable regions separated by energetic or entropic barriers, leading to very long simulation times for sufficient exploration of the configuration space.

	Traditional approaches to this sampling problem include Markov state models \cite{chodera2007automatic}, metadynamics \cite{laio2002escaping}, and umbrella sampling \cite{torrie1977nonphysical}. While useful, these methods often either  require careful hand-tuning of collective variables or suffer from high computational cost in high dimensions.
	
	A novel approach based on ideas from machine learning was introduced in a pioneering paper by No\'e et al.~\cite{noe2019boltzmann}, who proposed \emph{Boltzmann generators}---invertible neural networks (normalizing flows) trained to map an easy-to-sample reference distribution (or prior) on the high-dimensional configuration space, for instance a Gaussian, to the target Boltzmann distribution. Note that, unlike in standard approaches in machine learning for approximating high-dimensional distributions, \textit{the domain of the prior is not dimension-reduced.} Once the flow is trained, a single forward pass through the flow of statistically independent samples drawn from the prior yields statistically independent equilibrium samples. 
	This bypasses long-time stochastic trajectories and enables efficient exploration of metastable states. This approach builds on the rapid development of normalizing flows \cite{dinh2017density,kingma2018glow,papamakarios2021normalizing} and deep generative modeling for physical systems \cite{hernandez2019variational,wu2020deep}.
	
	Despite impressive empirical successes, a mathematical foundation of flow-based Boltzmann sampling is hitherto missing. Several open questions present themselves: 
	\begin{itemize}
		\item \textbf{Dealing with singularities:} Physical Boltzmann distributions vanish on the collision set where two atoms collide, due to the singularities of the underlying Coulomb and Lennard-Jones potentials. Such zero regions, in turn, prevent the existence of smooth transport maps (for a proof of this interesting phenomenon see section \ref{sec:regu}).  
		Thus it is not even clear whether any transport map exists between the Gaussian reference and the Boltzmann distribution which belongs to the regularity classes covered by expressivity results for neural networks.  
		\item \textbf{Indeterminacy:} Even if singularities are absent or have been removed by some controlled approximation, the transport condition is highly degenerate. Which of the infinite-dimensionally many transport maps should be targeted for theoretical approximation results (and practical training)?
		\item \textbf{Approximability by normalizing flows:} Because of the need to efficiently evaluate the Jacobian of the transport map, a suitable class of neural networks is given by normalizing flow architectures such as RealNVP \cite{dinh2017density}. Do these architectures possess sufficient expressive power to approximate the required transport maps between low-regularity densities well enough to produce correct samples? 
	\end{itemize}
	
	The first problem is overcome by a simple regularization of the underlying molecular potential in high-energy regions, which entails a controlled $L^1$ error on the Boltzmann distribution. See section \ref{sec:regu}. 
	
	Regarding the second problem, a natural idea would be to use the Brenier map supplied by optimal transport theory \cite{brenier1991polar, villani2003, friesecke2024optimal}. For image generation, methods based on the Brenier maps from Gaussian noise to subsets of training images have proven useful  \cite{lipmanflowmatching,liu2023rectifiedflow, chemseddine2025, tong2023, pooladian2023}. Neural network based solvers for more general optimal transport problems have been introduced in \cite{korotin2021, uscidda2023mongegap}, based on input convex eural networks (ICNNs) respectively multi-layer perceptrons (MLPs).  But the Brenier map, and the underlying flow, are not easy to compute between continuous distributions in high dimensions, and known regularity results for the map \cite{caffarelli1991regularity,dephilippis2011regularity} require strong assumptions on the densities. We therefore focus on a related but easier-to-compute and more regular transport map between continuous distributions, namely the Dacorogna-Moser map, and its underlying flow. We generalize its classical construction \cite{moser1965, dacorogna1990} from smooth densities to low-regularity densities (see Theorem \ref{thm:moser-approx} a)). 
	We remark that for a uniform reference density, the governing problem correponds to a linearization of the optimal transport problem \cite{friesecke2024optimal}. 
	The practical question of how to efficiently bias the learning of the flow towards the Dacorogna-Moser flow lies beyond the scope of this paper. 
	
	Finally, the third problem, rigorous approximability, is addressed in Theorem  \ref{thm:moser-approx} c).  
	We prove that the Moser map between continuously differentiable, everywhere positive densities can be approximated by normalizing flow architechtures such as RealNVP in the Sobolev space $W^{1,p}$.
	
	Combining these results yields existence of a normalizing flow between the reference measure and the true Boltzmann distribution of a molecular system up to an arbitrarily small error in the Wasserstein distance (see Theorem \ref{thm:Wasserstein}), thus providing a theoretical guarantee that the Boltzmann generator method gives correct samples. 
	
	Numerical simulations for a simple synthetic system and for the alanine dipeptide molecule (see section \ref{sec:double-well} and ~\ref{sec:alanine}) confirm that the learned flows yield correct samples reproducing the equilibrium densities. 
	
	\section{Molecular dynamics at thermal equilibrium}
	
	In equilibrium molecular dynamics, the state of a many-body system can be described by the Boltzmann probability distribution
	\begin{equation} \label{boltz}
		d\rho(x) = \frac{1}{Z} \exp\Bigl(-\frac{U(x)}{\kappa_B T}\Bigr)\mathrm{d}x,
	\end{equation}
	where $x=(x_1,...,x_N)$ with $x_i\in \R^3$ denoting the position of the $i$-th atom,  \(\kappa_B\) is a physical constant, \(T\) is the temperature, and \(U\) is the potential energy function of the system. One assumes that the position vector $x$ of all atoms is confined to some bounded domain $D\subset\R^{3N}$, and $Z$ is a normalization constant so that $\int_D \rho = 1$. 
	
	The potential energy function is typically taken to be of the form
	\begin{equation} \label{pot}
		\begin{aligned}
			U(x) = & \sum_{\ell_{ij} \text{ bond lengths}} b_{ij}(\ell_{ij}-\overline{\ell_{ij}})^2 
			+ \sum_{\alpha_{ijk} \text{ bond angles}} a_{ijk}(\alpha_{ijk}-\overline{\alpha_{ijk}})^2 \\
			& + \sum_{\phi_{ijkl} \text{ torsion angles}} \sum_{n=1}^{n_{max}} \kappa_{ijkl,n}\left[ 1+\cos(n\phi_{ijkl}-\overline{\phi_{ijkl,n}}) \right] \\
			& + \sum_{i,j} \frac{q_i q_j}{|x_i-x_j|} + \sum_{i,j} \left( \frac{A_{ij}}{|x_i-x_j|^{12}} - \frac{B_{ij}}{|x_i-x_j|^6} \right),
		\end{aligned}
	\end{equation}
	where the first three terms are summed, respectively, over the bond lengths, bond angles, and torsion angles in the bond graph of the molecular system and the last two terms are summed over all atoms in the system. In the above \(q_i\), \(A_{ij}\), \(B_{ij}\), \(a_{ijk}\), \(b_{ij}\), \(\kappa_{\phi_{ijkl,n}}\), \(\overline{\ell_{ij}}\), \(\overline{\alpha_{ijk}}\), and \(\overline{\phi_{ijkl,n}}\) are constants specific to the molecular system. The constants $b_{ij}$ and $a_{ijk}$ are positive, $\kappa_{ijkl,n}$, $A_{ij}$ and $B_{ij}$ are nonnegative, and $A_{ij}$ must be positive when $q_iq_j<0$ or $B_{ij}>0$, and \(x_i \in \R^3\) is the position of the \(i\)-th atom in the system.
	The last two terms in the energy functional represent electrostatic and Lennard-Jones interactions. 
	
	The energy \eqref{pot} has various important qualitative properties: 
	\begin{equation} \label{potquali}
		\begin{aligned}
			& U \, : \, \R^{3N}\to\R\cup\{+\infty\} \text{ is $C^1$ on }\{x \, : \, x_i-x_j\neq 0 \; \forall i\neq j \}, \\
			& U(x) \ge U_0 \text{ for some constant }U_0, \\
			& U(x) \to \infty \text{ as }\min_{i\neq j} |x_i-x_j|\to 0.
		\end{aligned}
	\end{equation}
	The last property in \eqref{potquali} means that the potential energy tends to infinity when two atoms approach each other. As a consequence, the Boltzmann distribution \eqref{boltz} vanishes on the collision set $\{x_i=x_j \, \text{for some }i\neq j\}$.

	\section{Transport maps and their lack of regularity} \label{sec:regu} 
	The Boltzmann generator approach for sampling from Boltzmann distributions starts from a smooth, positive, easy-to-sample reference distribution, or prior, and seeks to compute an invertible transport map between the reference distribution and the Boltzmann distribution \eqref{boltz}. We begin by recalling basic mathematical definitions and \textcolor{black}{facts}. We then show that in our present context, transport maps must necessarily be quite irregular.
	
	Assume that we are given two probability measures $\mu$ and $\nu$ on $\R^d$. (In the molecular dynamics case, $d=3N$.) A \textit{transport map} between $\mu$ and $\nu$ is a measurable map $T : \R^d \to \R^d$ satisfying
	\begin{equation} \label{pfw}
		T_\sharp \mu = \nu.
	\end{equation}
	The pushforward $T_\sharp \mu$ is defined such that for any measurable set $A \subset \R^d$, $(T_\sharp \mu)(A) = \mu(T^{-1}(A))$. Equivalently, for any measurable function $f$, we have
	\[
	\int_{\R^d} f(y)\, d(T_\sharp \mu)(y)
	=
	\int_{\R^d} f(T(x))\, d\mu(x).
	\]
	This means that if a random variable $X\sim \mu$ then $T(X)\sim \mu$. 
	
	Assume now that $\mu$ and $\nu$ possess densities $\rho_0$ and $\rho_1$. In this case we write condition \eqref{pfw} as 
	\begin{equation} \label{pfw'} 
		T_\sharp \rho_0=\rho_1.
	\end{equation}
	If the map $T$ is invertible and its inverse is differentiable, by the change-of-variables formula eq.~\eqref{pfw} is equivalent to 
	\begin{equation} \label{pfw_densities}
		\rho_1(z)
		=
		\rho_0(T^{-1}(z)) \left| \det J_{T^{-1}}(z) \right|.
	\end{equation}
	where $J_{T^{-1}}$ is the Jacobian matrix of $T$. 
	
	Invertible transport maps exist (in the low-regularity class of measurable mappings); a basic example is the Brenier map \cite{brenier1991polar, villani2003, friesecke2024optimal} from optimal transport theory which exists provided the densities have finite second moments or compact support. But this map is not easy to compute in high dimension. Moreover known regularity results \cite{caffarelli1991regularity,dephilippis2011regularity} require the target density to be bounded away from zero, which is violated by Boltzmann distributions for molecules, due to \eqref{potquali}.
	
	Another, simpler and smoother, example of a transport map is the Dacorogna-Moser map, described in the next section; but its construction requires, among other assumptions, strict positivity of the target density, and therefore fails for Boltzmann distributions for molecules, which vanish on the collision set, due to \eqref{potquali}.
	
	In fact we have:
	\begin{proposition}[Nonexistence of Lipschitz transport maps] \label{P:nonexist} Let $\Omega\subset\R^{3N}$ be any open bounded set, let $\rho_0$ be any reference probability density on $\Omega$ which is continuous and strictly positive, and let $\rho_1$ be a Boltzmann distribution \eqref{boltz} on an open bounded set $D\subset\R^{3N}$ with $U$ satisfying \eqref{potquali}. Then there does not exist any Lipschitz map $T$ 
		which transports $\rho_0$ to the Boltzmann distribution $\rho_1$.  
	\end{proposition}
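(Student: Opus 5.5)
The plan is to compare the mass that a Lipschitz map must deliver to small balls with the mass that $\rho_1$ actually puts there. A Lipschitz map with constant $L$ maps a ball of radius $r/L$ into a ball of radius $r$, so every small ball around a point of $T(\Omega)$ receives $\rho_0$-mass of order $r^{3N}$. Near a collision point, however, the Boltzmann density tends to zero, so $\rho_1$ gives such balls mass $o(r^{3N})$. These two rates are incompatible.

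\emph{Standing assumptions and step 1 (a collision point is hit).} I will assume the nontrivial situation in which $D$ contains a collision point $\bar y$, i.e.\ $\bar y_i=\bar y_j$ for some $i\neq j$; otherwise $\rho_1$ has no zeros and the claim is not meaningful. Since $T$ is Lipschitz and $\Omega$ is bounded, $T$ extends uniquely to a Lipschitz map on $\overline\Omega$, and $T(\overline\Omega)$ is compact. By \eqref{potquali}, $U$ is finite and continuous on $D$ minus the collision set, which is open, so $\rho_1>0$ there. Hence every ball around $\bar y$ has positive $\rho_1$-mass. Since $T_\sharp\rho_0=\rho_1$ gives $\nu(\R^{3N}\setminus T(\overline\Omega))=0$, every ball around $\bar y$ meets $T(\overline\Omega)$. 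Compactness then gives a point $\bar x\in\overline\Omega$ with $T(\bar x)=\bar y$.

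\emph{Step 2 (lower bound on the mass of small balls).} For $r>0$ we have $T(B(\bar x,r/L)\cap\Omega)\subset B(\bar y,r)$, and therefore
\[
\nu(B(\bar y,r))=\mu\bigl(T^{-1}(B(\bar y,r))\bigr)\ \ge\ \mu\bigl(B(\bar x,r/L)\cap\Omega\bigr)\ \ge\ m\, c\, (r/L)^{3N}.
\]
The last inequality needs two inputs: $\rho_0\ge m>0$ near $\bar x$, and $|B(\bar x,s)\cap\Omega|\ge c\,s^{3N}$ for small $s$. If $\bar x$ is an interior point, both hold immediately from the continuity and positivity of $\rho_0$. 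If $\bar x\in\partial\Omega$, I would use the (implicit) assumptions that $\rho_0$ is continuous and strictly positive up to the boundary and that $\Omega$ satisfies an interior cone condition, e.g.\ a Lipschitz domain.

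\emph{Step 3 (upper bound and contradiction).} By \eqref{potquali}, $\rho_1(y)=Z^{-1}e^{-U(y)/\kappa_BT}\to0$ as $y\to\bar y$. So $\varepsilon(r):=\sup_{B(\bar y,r)\cap D}\rho_1\to0$, and
\[
\nu(B(\bar y,r))\le \varepsilon(r)\,\omega_{3N}\,r^{3N},
\]
where $\omega_{3N}$ is the volume of the unit ball in $\R^{3N}$. Combining with step 2 gives $m\,c\,L^{-3N}\le\varepsilon(r)\,\omega_{3N}$ for all small $r$. This is impossible as $r\to0$, which proves the proposition.

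\emph{Main obstacle.} I expect the delicate point to be the case $\bar x\in\partial\Omega$ in step 2, where $\rho_0$ could degenerate or $\Omega$ could be cuspidal. The first thing I would try is to avoid the boundary altogether. Take any interior point $x_0\in\Omega$ and follow a path inside $\Omega$ from $x_0$ toward a preimage region. Because $T(\Omega)$ is dense in $D$ and a collision point lies in $D$, points of $T(\Omega)$ lie arbitrarily close to $\bar y$. At such points $T(x_n)=y_n$, I would apply the same argument with balls $B(x_n,r/L)$ of radius $r\le L\,\mathrm{dist}(x_n,\partial\Omega)$, together with a quantitative rate for $\varepsilon$. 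If that fails, I would simply state the regularity assumptions on $\Omega$ and $\rho_0$ at the boundary explicitly.
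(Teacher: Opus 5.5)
Your argument is correct, and it uses the Lipschitz bound in the opposite direction from the paper. The paper never locates a preimage of the collision point. It takes a small box $B$ around $x^0$ and applies the volume-inflation bound $\calL(T(A))\le(\mathrm{Lip}\,T)^{3N}\calL(A)$ to $A=T^{-1}(B)$. This gives $\mu(T^{-1}(B))\ge f_{min}(\mathrm{Lip}\,T)^{-3N}\calL(B)$, which contradicts $\nu(B)\le\eta(\eps)\calL(B)$. That route needs $T(T^{-1}(B))=B$. The paper obtains this by assuming $T$ invertible, though up to a null set it already follows from $T_\sharp\rho_0=\rho_1$ and $\rho_1>0$ a.e.\ on $D$. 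In exchange, the paper's route is insensitive to the shape of $\partial\Omega$.

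Your forward inclusion $T(B(\bar x,r/L))\subset B(\bar y,r)$ needs neither invertibility nor surjectivity. The price is that anchoring everything at a single preimage point is what forces the cone condition when $\bar x\in\partial\Omega$. That condition is an artifact: near an outward cusp, other parts of $\Omega$ must cover the rest of $B(\bar y,r)$.

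Your fallback removes the cone condition, and it needs no rate for $\eps(r)$, because the comparison is scale-invariant. Fix $\delta$ with $\sup_{B(\bar y,\delta)}\rho_1<mL^{-3N}$. Pick $x\in\Omega$ with $|T(x)-\bar y|<\delta/2$; this is possible since $\mu(T^{-1}(B(\bar y,\delta/2)))=\nu(B(\bar y,\delta/2))>0$. Then choose $s>0$ with $B(x,s)\subset\Omega$ and $Ls<\delta/2$, so that
\[
m\,\omega_{3N}s^{3N}\le\mu(B(x,s))\le\nu\bigl(B(T(x),Ls)\bigr)<mL^{-3N}\omega_{3N}(Ls)^{3N}=m\,\omega_{3N}s^{3N}.
\]
This contradiction also makes your compactness step and the point $\bar x$ unnecessary.

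What cannot be dropped is the uniform bound $\rho_0\ge m>0$ that you flag. With only pointwise positivity on the open set $\Omega$, the proposition is false: take $\Omega=D$ minus the collision set, $\rho_0=\rho_1|_\Omega$, and $T=\mathrm{id}$. The paper uses this bound tacitly when it asserts that $\min\rho_0$ over $T^{-1}(\cdot)$ is attained and positive. That preimage, however, is only relatively closed in $\Omega$.
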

	The singular nature of (exact) transport maps revealed by this result is a challenge for theory, and also for efficient learning of such maps, on which more later. 
	\\[3mm]
	\begin{proof}[\bf Proof] Let $x^0=(x_1^0,x_2^0,...,x_N^0)\in\R^{3N}$ be any point with $x_1^0=x_2^0$. For $r>0$ and $\eps>0$, define the following neighborhood of this point:  
		$$
		B'_\eps \times B'_r \times (B_r)^{N-2} := \{x\in\R^{3N} \, : \, |\tfrac{x_1-x_2}{\sqrt{2}}|\le \eps, \, |\tfrac{(x_1-x_1^0)+(x_2-x_2^0)}{\sqrt{2}}|\le r, \, |x_i-x_i^0|\le r \, \forall i=3,...,N\}. 
		$$
		Now assume for contradiction that there exists an invertible Lipschitz map $T\, : \, \Omega^N \to \R^{3N}$ such that $T_\sharp \rho_0 = \rho_1$. 
		Introduce the quantities 
		$$
		f_{min} := \min_{T^{-1}(B_r'\times B_r'\times (B_r)^{N-2})} \rho_0, \;\;\;\;\;\; 
		\eta(\eps) := \max_{B'_{\eps}\times B'_r \times (B_r)^{N-2}} \rho_1. 
		$$
		Note that the minimum is attained because the set over which it is taken is closed (since $T$ is continuous) and bounded (since $\Omega$ is). 
		The key point is that Lipschitz maps can inflate the Lebesgue measure of any measurable set $A\subset\R^{3N}$ only by a factor, 
		$
		\calL(T(A)) \le  ({\rm Lip}\, T)^{3N} \calL(A).
		$
		We exploit this as follows: 
		\begin{align*}
			f_{min} \cdot \calL\bigl(B_\eps' \times B_r' \times (B_r)^{N-2}\bigr) 
			&= f_{min} \cdot \calL\bigl(T(T^{-1}(B_\eps' \times B_r' \times (B_r)^{N-2}))\bigr) \\
			&\le f_{min} \cdot (Lip \, T)^{3N} \calL\bigl(T^{-1}(B_\eps' \times B_r' \times (B_r)^{N-2})\bigr) \\
			&\le (Lip \, T)^{3N} \mu\bigl(T^{-1}(B_\eps' \times B_r' \times (B_r)^{N-2})\bigr) \\
			&= (Lip \, T)^{3N} \nu\bigl(B_\eps' \times B_r' \times (B_r)^{N-2}\bigr) \\
			& = (Lip \, T)^{3N} \nu\bigl(B_\eps' \times B_r' \times (B_r)^{N-2}\bigr) \\
			& \le (Lip \, T)^{3N} \calL\bigl(B_\eps' \times B_r' \times (B_r)^{N-2}\bigr) \cdot \eta(\eps). 
		\end{align*}
		But $f_{min}$ is positive whereas $\eta(\eps)$ tends to zero as $\eps\to 0$, a contradiction. 
	\end{proof}

	\section{Normalizing flow approximation of transport maps}
	Throughout this section, we work in a general dimension $d$, which in the molecular dynamics setting corresponds to $d = 3N$. 
	
	Further, we assume that the map has the form of an $\ell$-fold composition
	\begin{equation} \label{comp_struc}
		T = G^{(\ell)} \circ \cdots \circ G^{(1)}.
	\end{equation}
	This composition structure emerges naturally for 
	(i) the Brenier transport map which can be constructed via the Benamou--Brenier dynamic formulation of optimal transport, 
	(ii) the Moser transport map \cite{moser1965,dacorogna1990} which is based on the flow of a more explicit vector field; see section \ref{sec:comp_struc}.
	
	In practice, the underlying flow and its step maps $G^{(k)}$ in \eqref{comp_struc} 
	are not available explicitly. 
	
	The normalizing flow approach approximates each $G^{(k)}$ by a trainable invertible neural
	network block $F^{(k)}_{\theta_k}$,
	\[
	G^{(k)} \approx F^{(k)}_{\theta_k},
	\]
	(which itself consists of multiple coupling layers and alternating input partitions). 
	The overall approximated transport map is then given by
	\[
	F_\theta
	=
	F_{\theta_\ell}^{(\ell)} \circ
	F_{\theta_{\ell-1}}^{(\ell-1)} \circ
	\cdots \circ
	F_{\theta_1}^{(1)},
	\]
	forming a neural-network discretization of the continuous transport.
	The resulting density is
	\[
	\rho_\theta = (F_\theta)_\sharp \rho_0,
	\]
	which approximates the target density $\rho_1$.
	Each $F^{(k)}_{\theta_k}$ is designed to be bijective with analytically
	computable inverse and Jacobian determinant.
	
	The RealNVP architecture \cite{dinh2017density} provides a widely used
	framework for constructing such invertible transformations.
	For a RealNVP flow with $\ell$ coupling layers, the transformation reads
	\[
	z = F_\theta(x)
	=
	F^{(\ell)} \circ F^{(\ell-1)} \circ \cdots \circ F^{(1)}(x).
	\]
	
	For each layer $k = 1, \dots, \ell$, the forward transformation is
	\[
	\begin{cases}
		z^{(k)}_a = x^{(k-1)}_a, \\[4pt]
		z^{(k)}_b
		=
		x^{(k-1)}_b \odot
		\exp\!\big(s^{(k)}_{\theta_k}(x^{(k-1)}_a)\big)
		+
		t^{(k)}_{\theta_k}(x^{(k-1)}_a),
	\end{cases}
	\]
	with $x^{(0)} = x$ and $z = z^{(\ell)}$.
	The inverse mapping is explicit:
	\[
	\begin{cases}
		x^{(k-1)}_a = z^{(k)}_a, \\[4pt]
		x^{(k-1)}_b
		=
		\big(z^{(k)}_b - t^{(k)}_{\theta_k}(z^{(k)}_a)\big)
		\odot
		\exp\!\big(-s^{(k)}_{\theta_k}(z^{(k)}_a)\big).
	\end{cases}
	\]
	
	The Jacobian matrix of each layer is block-triangular:
	\[
	J_{F^{(k)}}(x^{(k-1)})
	=
	\begin{pmatrix}
		I_{d_a} & 0 \\
		\dfrac{\partial z^{(k)}_b}{\partial x^{(k-1)}_a}
		&
		\operatorname{diag}\!\big(\exp(s^{(k)}_{\theta_k}(x^{(k-1)}_a))\big)
	\end{pmatrix},
	\]
	with determinant
	\[
	\det J_{F^{(k)}}(x^{(k-1)})
	=
	\exp\!\left(
	\sum_{j=1}^{d_b}
	s^{(k)}_{\theta_k,j}(x^{(k-1)}_a)
	\right).
	\]
	
	For the complete flow, the Jacobian determinant factorizes as
	\[
	\det J_{F_\theta}(x)
	=
	\prod_{k=1}^{\ell}
	\det J_{F^{(k)}}(x^{(k-1)}).
	\]
	This enables efficient density evaluation via the change-of-variables
	formula
	\[
	\log \rho_\theta(z)
	=
	\log \rho_0(F_\theta^{-1}(z))
	-
	\log \big|\det J_{F_\theta}(F_\theta^{-1}(z))\big|.
	\]
	
	\subsection{Composition structure of the Brenier and Moser maps} \label{sec:comp_struc}
	
	We now describe how the composition structure \eqref{comp_struc} arises naturally for standard transport maps.
	
	We begin with the Brenier map. Its basic static construction is as the optimizer $T\,  : \, \R^d \to \R^d$ of 
	$\int |x - T(x)|^2 \rho_0(x)\, dx$ 
	subject to the constraint $T_\sharp \rho_0 = \rho_1$. Alternatively, it is the time-1 flow map of the time-dependent vector field $(v_t)_{t\in[0,1]}$, $v_t \, : \, \R^d\to\R^d$   which minimizes the action
	$$
	\int_0^1 |v_t(x)|^2 \rho_t(x)\, dx \, dt 
	$$
	over velocity fields and density fields $(\rho_t)_{t\in[0,1]}$ 
	subject to the continuity equation 
	\begin{equation} \label{conteq}
		\frac{\partial \rho_t}{\partial t}
		+
		\nabla \cdot (\rho_t v_t) = 0
	\end{equation}
	and the endpoint conditions $\rho_t|_{t=0}=\rho_0$,$\rho_t|_{t=1}=\rho_1$. The optimal density field is the displacement interpolation $\rho_t = \Phi_t{}_\sharp \rho_0$, where $\Phi_t$ is the flow map of the optimal vector field. The flow map is  
	$\Phi_t(x)=(1-t)x + tT(x)$, that is to say it moves mass along linear trajectories from a point $x$ to its image $T(x)$ under the Brenier map. In particular, $T=\Phi_1$, that is, it is the time-one map of a continuous flow. 
	
	The Moser map has the advantage that it is more explicit, but the price to pay is that the densities  
	$\rho_0$ and $\rho_1$ must be assumed to be sufficiently smooth and strictly positive on a bounded domain with smooth boundary. The construction goes as follows (see \cite{moser1965, dacorogna1990}, and see \cite{friesecke2024optimal} for a textbook account). Define the velocity field 
	\begin{equation} \label{vt}
		v_t(x) = \frac{\nabla u(x)}{(1-t)\rho_0(x) + t \rho_1(x)},
	\end{equation}
	where $u$ solves the following Poisson equation with Neumann boundary conditions
	\begin{equation} \label{Neumann}
		-\Delta u = \rho_1 - \rho_0 \text{ in }\Omega, \;\;\;
		\nabla u \cdot n = 0 \text{ on }\partial \Omega. 
	\end{equation}
	Here $n(x)$ is the outward unit normal to $\partial \Omega$ at $x$. This construction guarantees that the linear density interpolation $\rho_t = (1-t)\rho_0 + t\rho_1$ satisfies the continuity equation \eqref{conteq}. 
	The velocity field $v_t$ gives rise to a flow map $\Phi_t : \Omega \to \Omega$, by solving the ordinary differential equation
	\begin{equation} \label{ODE}
		\frac{d}{dt}\Phi_t(x) = v_t(\Phi_t(x)),
		\qquad
		\Phi_0(x) = x.
	\end{equation}
	The map $T=\Phi_1$ is then a smooth diffeomorphism satisfying $(\Phi_1)_\sharp \mu = \nu$.
	
	Finally, for both the Brenier map and the Moser map, the composition structure \eqref{comp_struc} is obtained by discretizing the time interval $[0,1]$
	into $\ell$ uniform subintervals, each of length $1/\ell$, and introducing the flow maps $\Phi_{s,t}$ by 
	replacing the initial condition in \eqref{ODE} by $\Phi_{s,t}(x)|_{t=s} = x$. The flow map over the $k$-th timestep is then, letting $t_k=k/\ell$, 
	\[
	G^{(k)}(x) = \Phi_{t_{k-1},t_k}(x)
	\approx
	x + \frac{1}{\ell} v_{t_k}(x),
	\qquad k = 1, \dots, \ell,
	\]
	providing the decomposition 
	\[
	T
	= 
	G^{(\ell)} \circ G^{(\ell-1)} \circ \cdots \circ G^{(1)}.
	\]
	
	\section{Theoretical results}
	In the following section, we provide a rigorous theoretical foundation for the Boltzmann generator method \cite{noe2019boltzmann}, by combining a regularization argument, a rigorous construction of Moser transport for low-regularity endpoint densities, and known approximation results for RealNVP flows.

	\subsection{Regularization}
	Because of the nonexistence of smooth transport maps revealed by Proposition \ref{P:nonexist}, we begin by introducing a regularization of the potential such that the error in the generated samples can be made arbitrarily small, depending on the choice of the regularization parameter.

	\begin{theorem}[Convergence of regularized Boltzmann distributions] \label{thm:pot_approx} For an arbitrary number $N$ of atoms, consider the configuration space $\Omega$, where $\Omega\subset\R^{3N}$ is open and bounded. Let $\beta>0$, and let 
		$U : \Omega \to \R\cup\{+\infty\}$ be a potential satisfing \eqref{pot}. \\[2mm]
		{\rm a)} There exists a family of regularized potentials $\{U_\varepsilon\}_{\varepsilon > 0}$, $U_\varepsilon : \Omega \to \R$, satisfying:
		\begin{enumerate}
			\item[(i)] \textbf{Regularity:} $U_\epsilon$ is $C^1$ and Lipschitz.
			\item[(ii)] \textbf{Uniform lower bound:} There exists $M >0$ such that $U_\varepsilon(x) \ge -M$ for all $x \in \Omega$ and all $\varepsilon > 0$.
			\item[(iii)] \textbf{Pointwise convergence:} For all $x \in \Omega$, $\lim_{\varepsilon \to 0} U_\varepsilon(x) = U(x)$. 
			\item[(iv)]
			\textbf{No change except at high energy:} $U_\eps(x)=U(x)$ for all $x$ with $U(x)\le \tfrac{1}{\eps}$.  
		\end{enumerate}
		{\rm b)} 
		Define the normalized Boltzmann densities
		\[
		\rho_\varepsilon(x) = \frac{F_\varepsilon(x)}{Z_\varepsilon}, 
		\qquad 
		\rho(x) = \frac{F(x)}{Z}, 
		\]
		with the partition functions
		\[
		Z_\varepsilon = \int_\Omega F_\varepsilon(x)\,dx, 
		\qquad 
		Z = \int_\Omega F(x)\,dx.
		\]
		Then $\rho_\eps$ is $C^1$ and Lipschitz, and 
		\[
		\lim_{\varepsilon \to 0} \|\rho_\varepsilon - \rho\|_{L^1(\Omega)} = 0.
		\]
	\end{theorem}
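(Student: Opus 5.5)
The plan is to truncate $U$ at height $1/\eps$ with a smooth cap, so that (iv) holds by construction and (i)--(iii) follow directly.

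\textbf{Construction.} Fix, for each $\eps>0$, a function $\phi_\eps:\R\cup\{+\infty\}\to\R$ with the following properties:
\begin{itemize}
\item[$\cdot$] $\phi_\eps(s)=s$ for $s\le 1/\eps$;
\item[$\cdot$] $\phi_\eps$ is $C^1$ and nondecreasing;
\item[$\cdot$] $0\le \phi_\eps'\le 1$;
\item[$\cdot$] $\phi_\eps(s)=1/\eps+1$ for $s\ge 1/\eps+2$, and $\phi_\eps(+\infty):=1/\eps+1$.
\end{itemize}
For example, one can take $\phi_\eps'$ to be a $C^0$ ramp that decreases from $1$ to $0$ on $[1/\eps,1/\eps+2]$ with integral $1$. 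Then set $U_\eps=\phi_\eps\circ U$.

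\textbf{Properties (ii)--(iv).}
\begin{itemize}
\item[$\cdot$] (iv) holds by construction.
\item[$\cdot$] For (ii), we have $U_\eps\ge\min(U,1/\eps)\ge \min(U_0,0)$. Here $U_0$ is a global lower bound for $U$. I would verify it directly from \eqref{pot}: all terms are bounded below except possibly attractive Coulomb terms and $-B_{ij}|x_i-x_j|^{-6}$, and those are dominated near collisions by $A_{ij}|x_i-x_j|^{-12}$ with $A_{ij}>0$, by the sign assumptions. So $M=|U_0|+1$ works.
\item[$\cdot$] For (iii): if $U(x)<\infty$, then $U_\eps(x)=U(x)$ once $\eps<1/U(x)$. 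If $U(x)=\infty$, then $U_\eps(x)=1/\eps+1\to\infty$.
\end{itemize}

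\textbf{Regularity (i).} This is the main step. The set $K_\eps=\{U\ge 1/\eps+2\}$ contains a neighbourhood of the collision set $C$, since $U\to\infty$ near $C$ by \eqref{potquali}.
\begin{itemize}
\item[$\cdot$] Off $C$, $U$ is $C^1$, so $U_\eps$ is $C^1$ there by the chain rule.
\item[$\cdot$] On a neighbourhood of $C$, $U_\eps$ is constant, so it is $C^1$ there too. Hence $U_\eps\in C^1(\Omega)$.
\item[$\cdot$] For the Lipschitz bound: $\nabla U_\eps=\phi_\eps'(U)\nabla U$ vanishes on $K_\eps$. It is bounded by $\sup\{|\nabla U|: U\le 1/\eps+2\}$ elsewhere.
\end{itemize}
The obstacle is to bound $|\nabla U|$ on the sublevel set $\{U\le 1/\eps+2\}$. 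Two facts are needed:
\begin{itemize}
\item[$\cdot$] This sublevel set stays a positive distance $\delta_\eps$ from $C$. This follows from the last line of \eqref{potquali} together with the lower bound on the remaining terms.
\item[$\cdot$] On $\{\min|x_i-x_j|\ge\delta_\eps\}\cap\Omega$, the gradient of \eqref{pot} is bounded, because $\Omega$ is bounded and each term has a bounded derivative away from collisions.
\end{itemize}
One subtlety is the bond and torsion angle terms, whose angles are not differentiable when atoms are collinear. Following \eqref{potquali}, I take $C^1$-ness off $C$ as given. If needed, I would note that $(\alpha-\bar\alpha)^2$ and $\cos(n\phi-\bar\phi)$ are treated as smooth in the standard force-field convention.

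Lipschitz continuity on the possibly nonconvex set $\Omega$ follows from the gradient bound along segments. If $\Omega$ is not convex, I would instead work on its convex hull, or accept a local Lipschitz constant; I would simply extend $U$ by the same formula to $\R^{3N}$, where it is defined.

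\textbf{Part b).} Here $F_\eps=e^{-\beta U_\eps}$ and $F=e^{-\beta U}$, with $e^{-\infty}=0$.
\begin{itemize}
\item[$\cdot$] $F_\eps$ is $C^1$ and Lipschitz, being $\exp$ composed with a bounded-below Lipschitz $C^1$ function. Also $0<F_\eps\le e^{\beta M}$.
\item[$\cdot$] Since $F_\eps\ge e^{-\beta(1/\eps+1)}$ and $F_\eps$ is bounded above, $Z_\eps$ is positive and finite, so $\rho_\eps$ inherits regularity.
\item[$\cdot$] By (iii), $F_\eps\to F$ pointwise, including on $C$, where $F_\eps\to 0=F$.
\item[$\cdot$] The bound $F_\eps\le e^{\beta M}$ together with $|\Omega|<\infty$ allows dominated convergence, giving $F_\eps\to F$ in $L^1$ and $Z_\eps\to Z$.
\item[$\cdot$] $Z>0$ because $F>0$ off the null set $C$.
\end{itemize}
Then
\[
\|\rho_\eps-\rho\|_{L^1}\le \frac{\|F_\eps-F\|_{L^1}}{Z_\eps}+\|F\|_{L^1}\Bigl|\frac1{Z_\eps}-\frac1Z\Bigr|\to0 .
\]
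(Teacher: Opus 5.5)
Your proposal is correct and follows essentially the paper's route: make $U$ constant in the high-energy region $\{U>1/\varepsilon\}$, which contains a neighbourhood of the collision set, so that (i)--(iv) hold, and then obtain b) by dominated convergence using the uniform bound $F_\varepsilon\le e^{\beta M}$. The only difference is cosmetic. The paper interpolates with a spatial $C^1$ cutoff, $U_\varepsilon=(1-\zeta)U+\zeta\cdot\tfrac{2}{\varepsilon}$ with $\zeta=1$ on $\{U>2/\varepsilon\}$ and $\zeta=0$ on $\{U\le 1/\varepsilon\}$, whereas you compose with a one-dimensional cap $\phi_\varepsilon$; this gives the $C^1$ property and the gradient bound directly from the chain rule, and you also supply the Lipschitz, $Z>0$ and normalization details that the paper leaves implicit.
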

	A important feature of the above regularization is property (iv). Thus the potential, the molecular dynamics trajectories, and the Boltzmann distribution are unchanged except in regions of very high energy. At room temperature, trajectories will in practice not reach such energies and the above regularization is only needed for rigorous theory but not for practical learning of normalized flows.  
	\begin{proof}[\bf Proof] For $\lambda>0$, define the set $K_\lambda = \{x\in \Omega\, : \, U(x)>\lambda\}$. Choose a cutoff function $\zeta\in C^1(\R^{3N})$ with $0\le \zeta \le 1$ which is $1$ on $K_{2/\eps}$ and $0$ outside $K_{1/\eps}$, and let $U_\eps(x)=(1-\zeta(x))U(x)+\zeta(x) \cdot \tfrac{2}{\eps}$.  Then (i) and (iv) are  satisfied. Moreover $U_\eps$ is everywhere finite, $C^1$, and globally Lipschitz, and so is $\rho_\eps$. 
		Since the $U_\varepsilon$ are uniformly bounded below and converge pointwise to $U$, the Boltzmann factors $F_\varepsilon = e^{-\beta U_\varepsilon}$ converge pointwise to $F = e^{-\beta U}$. The bound $F_\varepsilon \le e^{\beta M}$ allows application of the dominated convergence theorem, implying $Z_\varepsilon \to Z$ and $F_\varepsilon \to F$ in $L^1(\Omega)$. 
		Hence $\rho_\varepsilon = F_\varepsilon / Z_\varepsilon \to \rho = F / Z$ in $L^1(\Omega)$, completing the argument.
	\end{proof}
	
	\subsection{Moser transport map: existence and approximation by RealNVP flow}
	
	We now prove that the Moser transport map exists between low-regularity endpoint densities, thereby extending the classical construction in a smooth setting \cite{moser1965, dacorogna1990}. We then show Sobolev approximability of the map by invertible neural networks. 
	
	\begin{theorem}[Existence and RealNVP approximation of the Moser transport map] \label{thm:moser-approx}
		Let $\Omega \subset \mathbb{R}^d$ be a bounded $C^3$ domain. Let $\rho_0$ and $\rho_1$ be Lipschitz continuous probability densities on $\Omega$ which are strictly positive with $\inf_\Omega \rho_0>0$, $\inf_\Omega\rho_1 >0$.   
		\\[1mm]
		{\rm a)} There exists a solution $u\in W^{2,2}(\Omega)$ to the Neumann problem \eqref{Neumann}, and this solution belongs to $W^{3,p}(\Omega)$ for all $p>1$. The corresponding Moser vector field \eqref{vt} is Lipschitz. The associated Moser transport map $T=\Phi_1$ defined by \eqref{ODE} is an invertible bilipschitz map  and transports $\rho_0$ to $\rho_1$, that is to say it satisfies \eqref{pfw}. 
		\\[2mm]
		If in addition $\rho_0$ and $\rho_1$ are $C^1$, then: 
		\\[2mm]
		{\rm b)} The Moser transport map $T=\Phi_1$ 
		is a $C^1$-diffeomorphism.
		\\[2mm]
		{\rm c)} For every $p>d$ and every $\varepsilon>0$, there exists an invertible
		RealNVP neural network $F_\theta:\Omega \to \Omega$ such that
		\[
		\|F_\theta - T\|_{W^{1,p}(\Omega)} < \varepsilon.
		\]
	\end{theorem}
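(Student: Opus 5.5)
For part a) I will first solve the Neumann problem \eqref{Neumann}. The compatibility condition $\int_\Omega(\rho_1-\rho_0)=0$ holds because both are probability densities. Lax--Milgram on the space of $W^{1,2}$ functions with zero mean then gives a weak solution $u$, unique up to constants. Since $\rho_1-\rho_0\in L^\infty\subset L^p$ and $\partial\Omega\in C^3$, Agmon--Douglis--Nirenberg $L^p$ regularity for the Neumann problem gives $u\in W^{2,p}$ for every $p<\infty$, in particular $u\in W^{2,2}$.

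To go one order higher, I will use that $\rho_1-\rho_0$ is Lipschitz, so it lies in $W^{1,\infty}\subset W^{1,p}$. The shifted ADN estimate (which needs a $C^3$ boundary) then yields $u\in W^{3,p}$ for all $p>1$. Taking $p>d$, Morrey's embedding gives $\nabla u\in W^{2,p}\subset C^{1,\alpha}(\overline\Omega)$, so $\nabla u$ is Lipschitz.

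The denominator $\rho_t=(1-t)\rho_0+t\rho_1$ is Lipschitz and bounded below by $\min(\inf\rho_0,\inf\rho_1)>0$. Hence $v_t=\nabla u/\rho_t$ is Lipschitz in $x$, uniformly in $t$, and continuous in $t$. Cauchy--Lipschitz then gives a unique flow. Because $v_t\cdot n=0$ on $\partial\Omega$, trajectories stay in $\overline\Omega$; I would justify this by extending $v$ Lipschitzly and using tangency, in the form of Nagumo's condition applied to the distance function.

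Grönwall gives $|\Phi_1(x)-\Phi_1(y)|\le e^{L}|x-y|$. The inverse is the backward flow, which satisfies the same estimate, so $T$ is bilipschitz. For the transport property, $\rho_t$ is a weak solution of \eqref{conteq} with velocity $v_t$, since $\rho_tv_t=\nabla u$ and $\partial_t\rho_t=\rho_1-\rho_0=-\Delta u$. The pushforward $(\Phi_t)_\sharp\rho_0$ is also a weak solution with the same initial datum. Uniqueness of weak solutions of the continuity equation for Lipschitz fields (by duality with the transport equation, or the superposition principle) then gives $(\Phi_1)_\sharp\rho_0=\rho_1$. The no-flux boundary condition handles the boundary terms.

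For part b), when $\rho_0,\rho_1\in C^1$ the field $v_t$ is $C^1$ in $x$, because $\nabla u\in C^{1,\alpha}$ and $\rho_t\in C^1$. Standard ODE theory on differentiable dependence on initial data then makes $\Phi_1$ a $C^1$ map, and its inverse is the $C^1$ backward flow. So $T$ is a $C^1$-diffeomorphism.

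Part c) is the main obstacle. My plan is to reduce it to a known universality result: $C^1$-diffeomorphisms can be approximated by compositions of affine coupling layers in $W^{1,p}$ on compacta. One route is the result of Teshima et al.\ on the universality of coupling flows for $C^2$ diffeomorphisms. If I use it, I will first approximate $T$ by a smooth diffeomorphism, obtained by mollifying $v_t$ and passing to its flow, which converges in $C^1$ by part b) and stability of flows. The cited results typically give sup-norm or $L^p$ control of the map only. To get derivative control, I would decompose the near-identity step maps $G^{(k)}=\Phi_{t_{k-1},t_k}$ into triangular maps, and each triangular map into coupling layers. I would approximate the coupling functions $s,t$ by neural networks in $C^1$ using Sobolev/$C^1$ universal approximation (e.g.\ Hornik). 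Errors then propagate through the composition with a constant depending only on Lipschitz bounds, since $W^{1,p}$ with $p>d$ is an algebra stable under composition with $C^1$ maps.

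A further subtlety is ensuring $F_\theta(\Omega)\subset\Omega$, which genuine RealNVP maps on $\R^d$ need not satisfy. I expect to handle it by working on a slightly larger set, or by interpreting the claim modulo a small boundary correction. This is also where I expect the real difficulty to lie.
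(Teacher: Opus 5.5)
\textbf{Parts a) and b).} These follow the paper's route: solve the Neumann problem, get $W^{3,p}$ regularity from the $W^{1,p}$ right-hand side, apply the Sobolev/Morrey embedding, then use Cauchy--Lipschitz and differentiable dependence on initial data. Your tangency argument for $\Phi_t(\Omega)=\Omega$ and your uniqueness argument for the continuity equation go further than the paper. The paper only asserts these two points (``standard fact of ODE theory'', ``by construction'').

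\textbf{The gap is in part c).} The paper proves c) in one line. It invokes, as a black box, a density theorem for RealNVP coupling flows in the class of $C^1$-diffeomorphisms on compact sets with respect to the $W^{1,p}$ topology (Ishikawa et al.). It then applies this to the $C^1$-diffeomorphism $T$ from b).

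You are right that sup- or $L^p$-universality alone would not suffice. But your replacement step fails at ``decompose each triangular map into coupling layers, then approximate $s,t$ in $C^1$''. An affine coupling layer is affine in the block $x_b$ it updates, with coefficients depending only on $x_a$. A composition of layers with the same partition is again affine in $x_b$. So a triangular map that is genuinely nonlinear in its last variable is not a composition of coupling layers acting on that variable. In $d=1$ this is plain: every RealNVP map is affine.

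Producing that nonlinearity forces you to alternate partitions. You then have no exact finite factorization whose coupling functions you could approximate in $C^1$. The constructions that do handle this, such as Teshima et al., give only $L^p$ or sup-norm approximation, with no control of the Jacobian. Jacobian control is exactly what $W^{1,p}$ needs.

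Your error-propagation argument through compositions is fine. However, it only applies once each factor is approximated in $C^1$, and that is the missing ingredient. In short, your proposal assumes rather than proves $W^{1,p}$-density of coupling flows. Either you import such a theorem, as the paper does, or c) remains unproved.

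\textbf{The boundary requirement.} You also leave $F_\theta(\Omega)\subset\Omega$ unresolved (``a slightly larger set'' or ``modulo a small boundary correction''). So even granting the approximation, you would not obtain the statement as formulated. The paper's proof is silent on this point too, but your proposal does not close it either.
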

	\vspace*{0mm}
	
	\noindent
	\begin{proof}[\bf Proof] We begin with a). Existence of a $W^{2,2}$ solution to the Neumann problem \eqref{Neumann} is a standard fact of PDE theory, noting that $\rho_1-\rho_0$ belongs to $L^2(\Omega)$ and integrates to $0$. Higher $W^{3,p}$ regularity is proved in  \cite{grisvard2011} Lemma 2.4.2.2 and Theorem 2.5.1.1, noting that $\rho_1-\rho_0$ belongs to $W^{1,p}(\Omega)$. This higher regularity means $\nabla u \in W^{2,p}$, and the Sobolev embedding theorem for $p>d$ gives $\nabla u\in C^1(\overline{\Omega})$. In particular, $\nabla u$ is Lipschitz. Together with the assumptions on $\rho_0$ and $\rho_1$ this yields that the vector field $v_t$ defined in \eqref{vt} is Lipschitz in $x$ and continuous in $t$. It is then a standard fact of ODE theory that the associated flow map $\Phi_t\, : \, \Omega\to\Omega$ exists and is an invertible bilipschitz map for any $t$. By construction $T$
		transports $\rho_0$ to $\rho_1$. 
		b) follows from the fact that under the additional assumption made,  the vector field \eqref{vt} is $C^1$. 
		As regards c), RealNVP coupling flows are dense with respect to the $W^{1,p}$ topology in the class of $C^1$-diffeomorphisms
		on compact domains \cite{ishikawa2023universal}. For an earlier, related approximation result in Sobolev spaces (namely approximability of Lipschitz functions by ReLu networks in the $W^{s,p}$ topology for all $s<1$) see 
		\cite{guhring2020error}. 
		Thus $T$ can be approximated arbitrarily well in $W^{1,p}(\Omega)$
		by a RealNVP map $F_\theta$.
		Finally, being a RealNVP map, $F_\theta$ is invertible. 
	\end{proof}

	\subsection{Convergence of push-forward density}
	Next we prove convergence of the push-forward density under the RealNVP flow, in the narrow sense. This relies on the $W^{1,p}$ convergence result from Theorem \ref{thm:moser-approx}, the Sobolev embedding theorem, and simple estimates. 
	
	Recall that a sequence of probability distributions $\rho_j$ on any closed subset of $\R^d$ converges narrowly to a probability distribution $\rho$ on $\Omega$ if
	\begin{equation} \label{narrow}
		\int_{\Omega} f(y)d\rho_j(y) \to \int_{\Omega} f(y) d\rho(y) \mbox{ for all }f\in C_b(\Omega),
	\end{equation}
	where $C_b(\Omega)$ is the space of bounded continuous functions on $\Omega$. 
	\begin{theorem}[Convergence of push-forward densities]
		\label{thm:density_convergence}
		Let $\Omega$, $\rho_0$, $\rho_1$ be as in Theorem \ref{thm:moser-approx} {\rm b)}. 
		Then there exists a sequence of invertible realNVP neural networks $F_{\theta_j}\, : \, \Omega\to\Omega$ such that
		$$
		F_{\theta_j}{}_\sharp \rho_0 \to \rho_1 \text{ narrowly as }j\to\infty.
		$$
	\end{theorem}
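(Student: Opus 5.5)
The plan is to combine Theorem \ref{thm:moser-approx} c) with the Sobolev embedding to get uniform convergence of the networks to the Moser map, and then pass to the limit inside the integrals defining narrow convergence. Fix $p>d$. By Theorem \ref{thm:moser-approx} b), the Moser map $T=\Phi_1$ is a $C^1$-diffeomorphism with $T_\sharp\rho_0=\rho_1$. By part c), for each $j$ there is an invertible RealNVP network $F_{\theta_j}:\Omega\to\Omega$ with
$$\|F_{\theta_j}-T\|_{W^{1,p}(\Omega)}<1/j.$$

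The first step is to upgrade this to uniform convergence. Since $\Omega$ is a bounded $C^3$ domain, hence Lipschitz, and $p>d$, Morrey's embedding $W^{1,p}(\Omega)\hookrightarrow C^{0,1-d/p}(\overline\Omega)$ holds with a constant $C$ depending only on $\Omega,p,d$. Applying it to the difference gives
$$\sup_{\Omega}|F_{\theta_j}-T|\le C\|F_{\theta_j}-T\|_{W^{1,p}}\to 0,$$
after choosing the continuous representatives. Here $F_{\theta_j}$ is smooth, being built from neural network coupling layers, and $T$ is $C^1$, so the representatives coincide with the maps themselves.

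Next I would verify \eqref{narrow} directly. Let $f\in C_b(\Omega)$. By definition of the pushforward and by $T_\sharp\rho_0=\rho_1$,
$$\int_\Omega f\,d(F_{\theta_j})_\sharp\rho_0-\int_\Omega f\,\rho_1 =\int_\Omega\bigl(f(F_{\theta_j}(x))-f(T(x))\bigr)\rho_0(x)\,dx.$$
The integrand is bounded by $2\|f\|_\infty\rho_0$, which is integrable. It also tends to $0$ pointwise, because $F_{\theta_j}(x)\to T(x)$ and $f$ is continuous at $T(x)\in\Omega$. Dominated convergence then gives the claim. Note that this step uses only pointwise convergence of the maps, and no uniform continuity of $f$, which matters because $\Omega$ is open.

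The step I expect to need the most care is the Sobolev embedding, specifically making sure it applies on $\Omega$ and that $F_{\theta_j}$ takes values in $\Omega$, so that $f\circ F_{\theta_j}$ is defined. The latter is guaranteed by the statement of Theorem \ref{thm:moser-approx} c), which provides $F_\theta:\Omega\to\Omega$. The regularity of $\partial\Omega$ covers the extension operator that Morrey's inequality on a domain requires.

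As an optional remark, the same uniform bound gives a quantitative estimate. Since both measures live on the bounded set $\Omega$, coupling $(F_{\theta_j},T)_\sharp\rho_0$ yields $W_1\bigl((F_{\theta_j})_\sharp\rho_0,\rho_1\bigr)\le\sup_\Omega|F_{\theta_j}-T|\to0$, which is stronger than narrow convergence.
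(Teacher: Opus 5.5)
Your proposal is correct and follows the same route as the paper: Theorem~\ref{thm:moser-approx} c) combined with the Morrey embedding for $p>d$ gives $\sup_\Omega|F_{\theta_j}-T|\to 0$, and one then passes to the limit in $\int_\Omega \bigl(f(F_{\theta_j}(x))-f(T(x))\bigr)\rho_0(x)\,dx$. Your dominated-convergence step is slightly more careful than the paper's, which bounds this integral by $\sup_{x\in\Omega}|f(F_{\theta_j}(x))-f(T(x))|$ and so tacitly assumes $f$ is uniformly continuous; a general $f\in C_b(\Omega)$ on an open $\Omega$ need not be.
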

	The theorem says that bounded continuous observables are approximated correctly by the RealNVP coupling flow. 
	
	We conjecture that such an approximation result is also possible in the strong $L^1$ sense. This would follow if the RealNVP map $F_\theta$ from Theorem \ref{thm:moser-approx} could be constructed in such a way that, in addition, its inverse $F_\theta^{-1}$ approximates $T^{-1}$ arbitrarily well in $W^{1,p}(\Omega)$. Such a result with $p=d$ would imply that the inverse Jacobian 
	appearing in \eqref{pfw_densities} is approximated in $L^1(\Omega)$. 
	\\[2mm]
	\begin{proof}[\bf Proof] Let $T \, : \, \Omega\to\Omega$ be the Moser map between $\rho_0$ and $\rho_1$ from Theorem \ref{thm:moser-approx} a), b). By Theorem \ref{thm:moser-approx} c) and the Sobolev embedding $W^{1,p}(\Omega)\hookrightarrow C^0(\overline{\Omega})$ for $p>d$, there exists a sequence of invertible RealNVP neural network $F_{\theta_j}$  such that 
		\begin{equation} \label{unifconv}
			\sup_{x\in\Omega} |F_{\theta_j}(x)-T(x)| \to 0 \; (j\to\infty).
		\end{equation}
		For any given $f\in C_b(\Omega)$ we now calculate using the change-of-variables formula for the push-forward
		\begin{align*}  
			\Big|\int f \, d F_{\theta_j}{}_\sharp \rho_0 - \int f \, d\rho_1 \Big| &=  \Big|\int f \, d F_{\theta_j}{}_\sharp \rho_0 - \int f \, dT_\sharp\rho_0 \Big| \\ 
			&= \Big| \int f(F_{\theta_j}(x)) d\rho_0(x) - \int f(T(x)) \, d\rho_0(x) \Big| \\
			&\le \sup_{x\in\Omega} \big|f(F_{\theta_j}(x)) - f(T(x))\big| \; \int 1 \, d\rho_0 \; \to 0 \; (j\to\infty),
		\end{align*}
		where the convergence in the last line is due to the fact the first factor converges to zero thanks to \eqref{unifconv} and the continuity of $f$, and the second factor is equal to $1$ since $\rho_0$ is a probability measure. Since $f$ was arbitrary, this establishes the asserted narrow convergence \eqref{narrow}. 
	\end{proof}
	
	\subsection{Main result} 
	Finally we are in a position to state our main result. 
	\begin{theorem} [Theoretical justification of the Boltzmann generator method] \label{thm:Wasserstein} 
		For an arbitrary number $N$ of atoms, any bounded domain $\Omega\subset\R^{3N}$ of class $C^3$, any inverse temperature $\beta>0$, any potential 
		$U : \Omega \to \R\cup\{+\infty\}$ satisfing \eqref{pot}, any $C^1$ and Lipschitz reference probability density on $\Omega$ which is strictly positive with $\inf_\Omega \rho_0>0$, and any $\eps>0$, there exists an invertible
		RealNVP neural network $F_\theta:\Omega \to \Omega$ such that
		\[
		W_2(F_\theta{}_\sharp \rho_0, \, \rho_1) < \varepsilon,
		\]
		where $\rho_1$ is the Boltzmann distribution \eqref{boltz} and $W_2$ denotes the Wasserstein-2 distance. 
	\end{theorem}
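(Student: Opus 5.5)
We combine the three previous results with a triangle inequality for $W_2$. Since $\Omega$ is bounded, all measures involved live on $\overline{\Omega}$, a compact set of diameter $R:=\operatorname{diam}\Omega$. On such a set $W_2$ metrizes narrow convergence, and it is dominated by total variation via the standard bound
\[
W_2(\mu,\nu)^2 \le R^2\,\|\mu-\nu\|_{TV} \le R^2\,\|\rho_\mu-\rho_\nu\|_{L^1(\Omega)}
\]
for measures with densities. This bound follows by coupling the common mass $\min(\rho_\mu,\rho_\nu)$ along the identity and the remaining mass arbitrarily. We then split
\[
W_2(F_\theta{}_\sharp\rho_0,\rho_1) \le W_2(F_\theta{}_\sharp\rho_0,\rho_{\eta}) + W_2(\rho_{\eta},\rho_1),
\]
where $\rho_\eta$ is the regularized Boltzmann density of Theorem \ref{thm:pot_approx}. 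We write $\eta$ for its regularization parameter to avoid clashing with the target accuracy $\eps$.

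\emph{Step 1 (regularization).} By Theorem \ref{thm:pot_approx} b), $\rho_\eta\to\rho_1$ in $L^1(\Omega)$. By the bound above, we can fix $\eta$ so small that $W_2(\rho_\eta,\rho_1)<\eps/2$. We then record the properties of $\rho_\eta$ needed for Theorem \ref{thm:moser-approx}. It is $C^1$ and Lipschitz by Theorem \ref{thm:pot_approx}. It is strictly positive with $\inf_\Omega\rho_\eta>0$, because $U_\eta\le \max(U,2/\eta)$ on the set where the cutoff is active, and $U_\eta$ is continuous and finite on $\overline\Omega$. This step needs some care: the regularized $U_\eta$ must be bounded above on $\Omega$. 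It is bounded where $U\ge 1/\eta$ by construction. Where $U<1/\eta$ it equals $U$, which is bounded there by $1/\eta$. So $U_\eta\le 2/\eta$ everywhere, and $\rho_\eta\ge e^{-2\beta/\eta}/Z_\eta>0$.

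\emph{Step 2 (Moser map and RealNVP approximation).} Apply Theorem \ref{thm:moser-approx} a), b) with endpoint densities $\rho_0$ and $\rho_\eta$. This yields a $C^1$ Moser diffeomorphism $T$ with $T_\sharp\rho_0=\rho_\eta$. By Theorem \ref{thm:density_convergence}, there are RealNVP maps $F_{\theta_j}$ with $F_{\theta_j}{}_\sharp\rho_0\to\rho_\eta$ narrowly. Since all these measures are supported in the compact set $\overline\Omega$, narrow convergence implies $W_2$ convergence. A more direct route uses the uniform convergence \eqref{unifconv} and the coupling $(F_{\theta_j},T)_\sharp\rho_0$:
\[
W_2(F_{\theta_j}{}_\sharp\rho_0,\rho_\eta)^2\le\int|F_{\theta_j}-T|^2\rho_0\le \sup_\Omega|F_{\theta_j}-T|^2\to0 .
\]
We choose $j$ with this quantity below $\eps/2$ and combine it with Step 1 to conclude.

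The main obstacle is Step 1's verification that $\rho_\eta$ satisfies the hypotheses of Theorem \ref{thm:moser-approx}. These are Lipschitz continuity and positivity up to the boundary of $\Omega$, where $U$ may be singular on the collision set. The paper's cutoff construction in Theorem \ref{thm:pot_approx} handles this, but the upper bound on $U_\eta$ and the $C^1$-up-to-the-boundary property have to be extracted explicitly. The remaining steps are routine once one notes that boundedness of $\Omega$ makes $W_2$ controlled by both $L^1$ and uniform distance.
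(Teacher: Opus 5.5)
Your proposal is correct and follows the paper's route: regularize via Theorem \ref{thm:pot_approx}, take the Moser map from $\rho_0$ to $\rho_\eta$ and approximate it by RealNVP flows (Theorems \ref{thm:moser-approx} and \ref{thm:density_convergence}), then pass to $W_2$ on the bounded domain. You differ only in three respects: you write the triangle inequality out, you check $\inf_\Omega\rho_\eta\ge e^{-2\beta/\eta}/Z_\eta>0$ (which the paper leaves implicit), and you replace the fact that $W_2$ metrizes narrow convergence on compact sets by the explicit bounds $W_2(\rho_\eta,\rho_1)^2\le R^2\|\rho_\eta-\rho_1\|_{L^1(\Omega)}$ and $W_2(F_{\theta_j}{}_\sharp\rho_0,\rho_\eta)\le\sup_\Omega|F_{\theta_j}-T|$, which are quantitative and slightly more robust, since the coupling estimate does not need the pushforwards to live on a common compact set.
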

	
	\noindent
	\begin{proof}[\bf Proof] This follows by combining Theorems \ref{thm:pot_approx}, \ref{thm:moser-approx}, \ref{thm:density_convergence} and the fact (see e.g. \cite{villani2003,  friesecke2024optimal}) that the Wasserstein-2 distance metrizes narrow convergence of probability measures on compact sets. 
	\end{proof}
	
	\section{Numerical Experiments}
	
	In this section, we illustrate the Boltzmann generator method for two models: (1) overdamped Langevin dynamics for a particle in a two-dimensional double-well potential, (2) full molecluar dynamics for the alanine dipeptide molecule. We also check  how close the true and generated distributions are in the Wasserstein-2 distance and find close agreement, in line with what is expected from our theoretical findings (Theorem \ref{thm:Wasserstein}).

	\subsection{2D Double-well potential}
	\label{sec:double-well}
	We simulated the Boltzmann distribution \eqref{boltz} for a 2D double-well potential 
	using overdamped Langevin dynamics
	\[
	dx = - \nabla U(x) \, dt  + \sqrt{\tfrac{2}{\beta}} dW
	\]
	to obtain stochastic trajectories whose position distribution approaches the Boltzmann distribution at long time.   
	The potential is defined as
	\[
	U(x_1, x_2) = \frac{1}{4}(x_1^2 - 1)^2 + \frac{1}{2}x_2^2,
	\]
	which forms two stable wells along the horizontal axis. The random thermal fluctuations allow the particle to cross the energy barrier, producing transitions between metastable basins. This 
	behavior is prototypical for real molecules. For the real data shown in Figure \ref{fig:2Dexample} we used 600 timesteps. 
	
	We trained a RealNVP flow with 6 coupling layers and 128 hidden units, resulting in a total of 7,704 trainable parameters, to learn the equilibrium distribution from Langevin trajectories. As a loss function we used negative log likelihood. After training, we generated 10,000 independent samples from the flow.
	
	\begin{figure}[H]
		\centering
		\begin{subfigure}{0.48\textwidth}
			\centering
			\includegraphics[width=\linewidth]{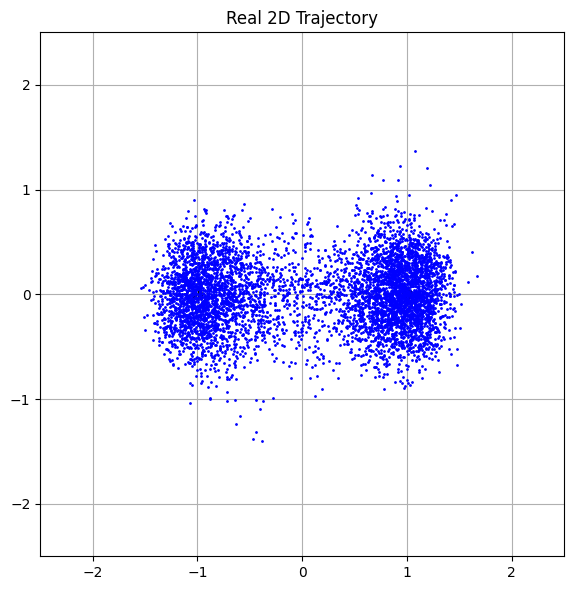}
			\caption{Real Langevin 2D trajectory}
		\end{subfigure}
		\hfill
		\begin{subfigure}{0.48\textwidth}
			\centering
			\includegraphics[width=\linewidth]{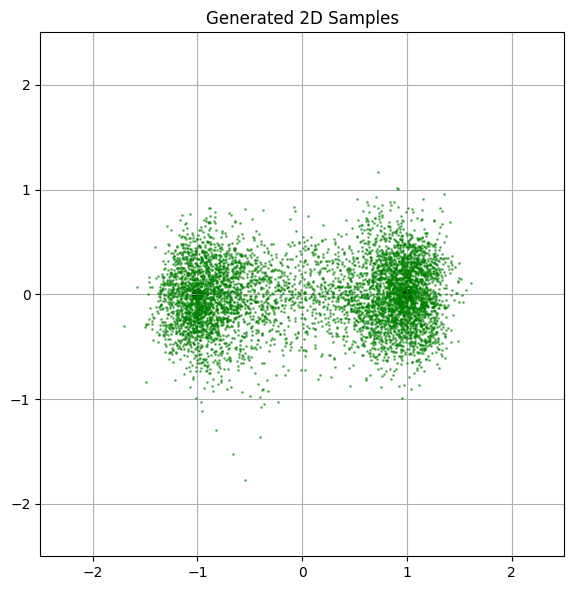}
			\caption{RealNVP generated 2D samples}
		\end{subfigure}
		
		\vspace{0.3cm}
		
		\begin{subfigure}{0.48\textwidth}
			\centering
			\includegraphics[width=\linewidth]{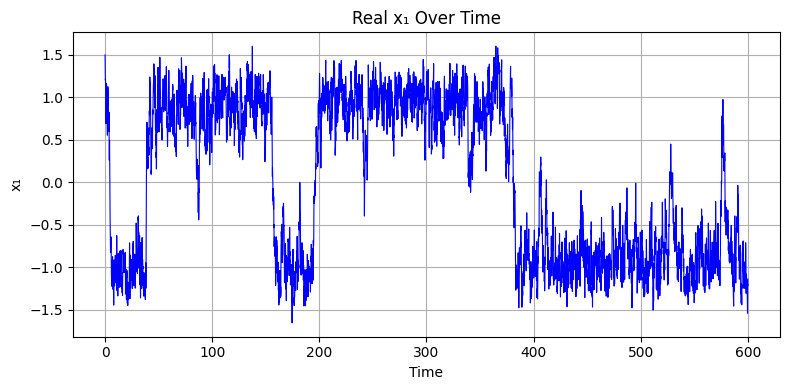}
			\caption{Real $x_1$ component over time}
		\end{subfigure}
		\hfill
		\begin{subfigure}{0.48\textwidth}
			\centering
			\includegraphics[width=\linewidth]{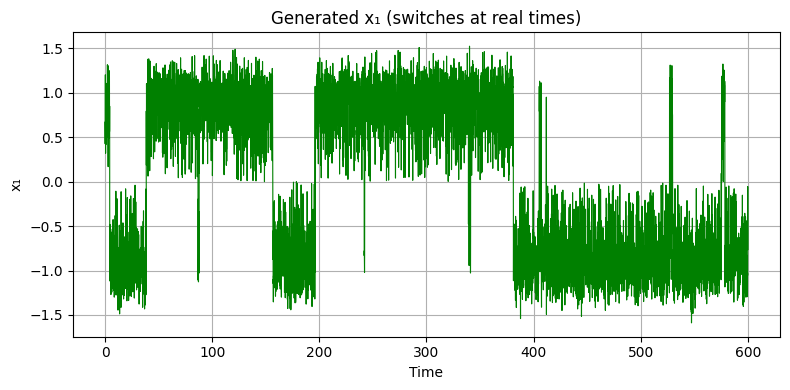}
			\caption{Generated $x_1$ component over time}
		\end{subfigure}
		\caption{Comparison of real Langevin dynamics and RealNVP generated samples for the 2D double-well potential.}
		\label{fig:2Dexample}
	\end{figure}
	
	To evaluate the accuracy of the trained model, we computed the Wasserstein-2 distance $W_2$ between the samples generated by the RealNVP and the reference Langevin trajectories. We used the Python Optimal Transport (POT) library \cite{flamary2021pot}to compute the 2D Wasserstein distance. The 2D joint distribution yields a $W_2$ distance of $0.0993$, while the per-coordinate distances are $0.0101$ for $x_1$ and $0.0036$ for $x_2$, with an average of $0.0069$. 
	
	The model also exactly reproduces the switching dynamics, with both real and generated trajectories exhibiting exactly 41 transitions between the two wells. These results demonstrate that the RealNVP does not just successfully capture the equilibrium Boltzmann distribution, but also the metastable dynamics of the double-well system.
	\subsection{Alanine Dipeptide Molecular System}
	\label{sec:alanine}
	We validated the Boltzmann generator method on realistic molecular data by applying it  to alanine dipeptide, a standard benchmark system for studying conformational dynamics \cite{chodera2007automatic}.
	
	We used a data set with 6 independent 10 ns molecular dynamics (MD) trajectories (60 ns, total frames 59,994) simulated in an implicit solvent at $T=300$ K using the AMBER99SB-ILDN force field  \cite{lindorfflarsen2010}. The backbone dihedral angles, $\Phi$ and $\Psi$, which range over $[-\pi,\pi]$, serve as the slow collective variables of the system and capture the primary conformational changes. The empirical free energy surface $F(\Phi,\Psi)$ is estimated from the long MD trajectories. 
	
	We trained a RealNVP flow on dihedral angle samples projected to $(\Phi,\Psi)$ space to learn the equilibrium densities. We used $\ell=6$ coupling layers with 64 hidden units for each $(s_\theta,t_\theta)$ subnetwork, resulting in 2,316 trainable parameters in total. Again we used negative log likelihood as a loss. The  model successfully captured the main stable conformations of alanine dipeptide, corresponding to the well-known metastable regions of the $(\Phi,\Psi)$ free energy landscape.
	
	\begin{figure}[H]
		\centering
		\begin{subfigure}{0.48\textwidth}
			\centering
			\includegraphics[width=\linewidth]{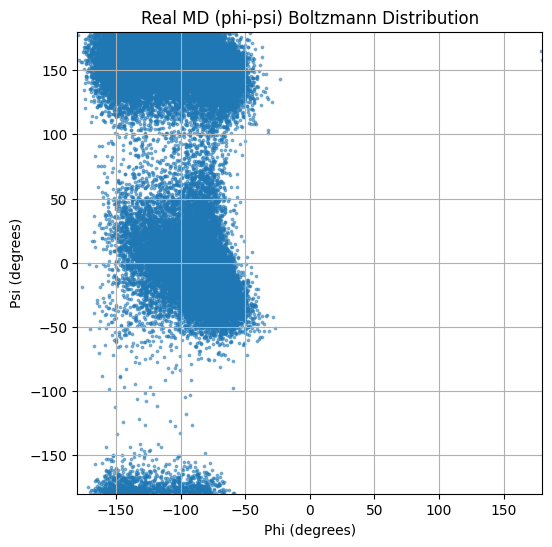}
			\caption{True Boltzmann distribution}
		\end{subfigure}
		\hfill
		\begin{subfigure}{0.48\textwidth}
			\centering
			\includegraphics[width=\linewidth]{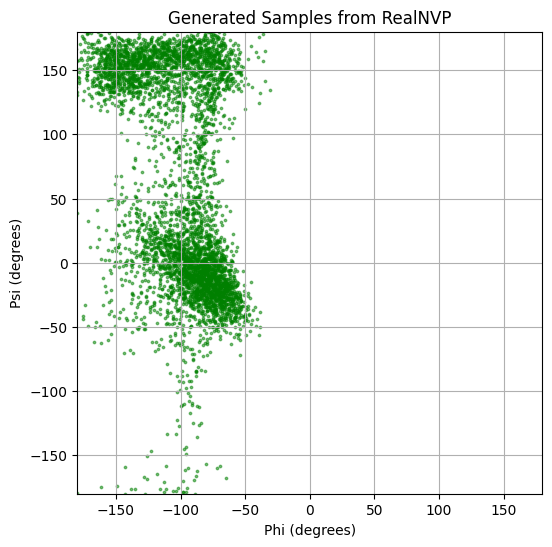}
			\caption{RealNVP generated distribution}
		\end{subfigure}
		\caption{Comparison of true and generated $\Phi$--$\Psi$ distributions for alanine dipeptide}
	\end{figure}
	The generated $(\Phi,\Psi)$ samples closely follow the true Boltzmann distribution obtained from the MD simulation. The main metastable regions appear at similar positions in both plots, indicating that the RealNVP successfully captures the preferred conformations. The small differences seen near transition regions are expected due to sampling noise.
	
	To quantitatively assess the accuracy of the generated samples, as in the previous example we computed the Wasserstein-2 distance between the true and generated distributions using the Python optimal transport library \cite{flamary2021pot}. Following standard practice, we subsampled 1000 points from each distribution. The resulting distance of $9.21^\circ$ confirms very good agreement between the generated and true Boltzmann distributions.

	\section{Discussion and Conclusions}
	
	We have shown that RealNVP-based normalizing flows are capable in theory of producing correct samples from Boltmann distributions of molecules despite their low regularity. Moreover our numerical simulations add further confirmation that these flows reproduce key features of such distributions in practice as first observed in \cite{noe2019boltzmann}, and quantify the error between the generated and true distributions via the Wasserstein distance. 
	
	Our theoretical work shows that a transport map with very nice theoretical properties, e.g. simultaneously controlled size of gradients and inverse gradients, is provided by a simplified version of optimal transport, namely Moser transport. Current RealNVP training protocols provide more complicated transport maps which are expected to exhibit uncontrollably large Lipschitz constants for the inverse map as discussed in other contexts in \cite{behrmann2021, kirichenko2020}. This calls into question numerical invertibility of the flow, and robustness for unseen  initial data. An interesting question left open by our work is whether good numerical invertibility in high dimensions can be achieved by a training bias towards ``nice'' transport maps like the Moser map without introducing too much additional computational complexity.

\begin{small}
\section*{Acknowledgements}
\end{small}

\noindent
This work was funded by DFG within the priority program SPP 2298 \textit{Theoretical Foundations of Deep Learning}, project number 543965508. 
\vspace*{3mm}

\begin{spacing}{0.9}
\bibliographystyle{alpha}

\newcommand{\etalchar}[1]{$^{#1}$}

\end{spacing}

\end{document}